\newtheorem{mytheorem}{Theorem}
\newtheorem{myexercise}[mytheorem]{Exercise}
\newtheorem{mydefinition}[mytheorem]{Definition}
\newtheorem{mylemma}[mytheorem]{Lemma}
\newtheorem{mycorollary}[mytheorem]{Corollary}
\def\LH{\Upsilon}
\def\sgn{\mathrm{sgn}}
\title{Reward-Punishment Symmetric\\Universal Intelligence}
\titlerunning{Reward-Punishment Symmetric Universal Intelligence}
\author{Samuel Allen Alexander\inst{1}
\and
Marcus Hutter\inst{2}}
\authorrunning{S.\ A.\ Alexander \& M.\ Hutter}
\institute{The U.S.\ Securities and Exchange Commission
\email{samuelallenalexander@gmail.com}
\url{https://philpeople.org/profiles/samuel-alexander/publications}
\and
DeepMind \& AMU
\url{http://www.hutter1.net/}
}
\begin{document}

\maketitle

\begin{abstract}
    Can an agent's intelligence level be negative?
    We extend the Legg-Hutter agent-environment framework to include punishments
    and argue for an affirmative answer to that question.
    We show that if the background encodings and Universal Turing Machine (UTM) admit
    certain Kolmogorov complexity symmetries,
    then the resulting Legg-Hutter intelligence measure is symmetric about
    the origin. In particular, this implies reward-ignoring agents
    have Legg-Hutter intelligence $0$ according to such UTMs.
\keywords{Universal intelligence  \and Intelligence measures \and Reinforcement learning.}
\end{abstract}

\section{Introduction}

In their paper \cite{legg2007universal}, Legg and Hutter write:
\begin{quote}
    ``As our goal is to produce a definition of intelligence that is as broad and
    encompassing as possible, the space of environments used in our definition should
    be as large as possible.''
\end{quote}
So motivated, we investigate what would happen if we extended the universe
of environments to include environments with rewards from $\mathbb Q\cap [-1,1]$
instead of just from $\mathbb Q\cap [0,1]$ as in Legg and Hutter's paper.
In other words, we investigate what would happen if environments are not only
allowed to reward agents but also to punish agents (a punishment being a negative
reward).

We discovered that when negative rewards are allowed, this
introduces a certain algebraic structure into the agent-environment framework. The
main objection we anticipate to our extended framework
is that it implies the negative intelligence of certain
agents\footnote{Thus, this paper falls under the broader
program of advocating for intelligence measures having different ranges than
the nonnegative reals. Alexander has advocated
more extreme extensions of the range of intelligence measures
\cite{alexander2020archimedean} \cite{alexander2021measuring}; by contrast,
here we merely question the
assumption that intelligence never be negative, leaving aside the
question of whether intelligence should be real-valued.}.
We would argue that this makes perfect sense when environments are capable of punishing
agents: if the intelligence level of a reinforcement learning agent is a measure of
its ability to extract large rewards on average across many environments, then
an agent who instead extracts large punishments should have a negative intelligence level.

This paper advances the practical pursuit of AGI by suggesting
(in Section \ref{mainsecn})
certain symmetry constraints which would narrow down the space of
background UTMs, thereby refining at least one approach to
intelligence measurement. In particular these constraints are one
answer to Leike and Hutter, who asked: ``But what are other desirable
properties of a UTM?'' \cite{leike2015bad}.

The structure of the paper is as follows:
\begin{itemize}
    \item In Section \ref{prelimsecn}, we give preliminary definitions.
    \item In Section \ref{dualsection}, we introduce what we call the dual of an agent and of
        an environment, and prove some algebraic theorems about these.
    \item In Section \ref{mainsecn}, we show the existence of UTMs yielding Kolmogorov
        complexities with certain symmetries, and show that the resulting Legg-Hutter
        intelligence measures are symmetric too.
    \item In Section \ref{absvaluesection} we consider the absolute value of Legg-Hutter
        intelligence as an alternative intelligence measure.
    \item In Section \ref{conclusionsecn}, we summarize and make concluding remarks, including
        remarks about how these ideas might be applied to certain other intelligence measures.
\end{itemize}

\section{Preliminaries}
\label{prelimsecn}

In defining agent and environment below, we attempt to follow
Legg and Hutter \cite{legg2007universal} as closely as possible,
except that we permit environments to output rewards from $\mathbb Q \cap [-1,1]$
rather than just $\mathbb Q\cap [0,1]$ (and, accordingly, we modify which well-behaved
environments to restrict our attention to).

Throughout the paper, we implicitly
fix a finite set $\mathcal A$ of \emph{actions},
a finite set $\mathcal O$ of \emph{observations},
and a finite set $\mathcal R\subseteq \mathbb Q\cap [-1,1]$ of \emph{rewards}
(so each reward is a rational number between $-1$ and $1$ inclusive),
with $|\mathcal A|>0$,
$|\mathcal O|>0$, $|\mathcal R|>0$.
We assume that $\mathcal R$ has the following property:
whenever $\mathcal R$ contains any reward $r$, then $\mathcal R$
also contains $-r$.
We assume $\mathcal A$, $\mathcal O$, and $\mathcal R$ are mutually disjoint
(i.e., no reward is an action, no reward is an observation, and no action is an
observation).
By $\langle\rangle$ we mean the empty sequence.

\begin{mydefinition}
\label{omnibusdefn}
    (Agents, environments, etc.)
    \begin{enumerate}
        \item
        By $(\mathcal O\mathcal R\mathcal A)^*$ we mean the set of
        all finite sequences starting with an observation, ending with an action,
        and following the pattern ``observation, reward, action, ...''.
        We include $\langle\rangle$ in this set.
        \item
        By $(\mathcal O\mathcal R\mathcal A)^* \mathcal O\mathcal R$
        we mean the set of all sequences of the form $s\frown o\frown r$ where
        $s\in (\mathcal O\mathcal R\mathcal A)^*$, $o\in\mathcal O$
        and $r\in\mathcal R$ ($\frown$ denotes concatenation).
        \item
        By an \emph{agent}, we mean a function $\pi$
        with domain $(\mathcal O\mathcal R\mathcal A)^* \mathcal O\mathcal R$,
        which assigns to every sequence
        $s\in (\mathcal O\mathcal R\mathcal A)^* \mathcal O\mathcal R$ a
        $\mathbb Q$-valued probability measure,
        written $\pi(\bullet|s)$, on $\mathcal A$.
        For every such $s$ and every $a\in\mathcal A$,
        we write $\pi(a|s)$ for $(\pi(\bullet|s))(a)$.
        Intuitively, $\pi(a|s)$ is the probability that agent $\pi$
        will take action $a$ in response to history $s$.
        \item
        By an \emph{environment}, we mean a function $\mu$
        with domain $(\mathcal O\mathcal R\mathcal A)^*$,
        which assigns to every
        $s\in (\mathcal O\mathcal R\mathcal A)^*$
        a $\mathbb Q$-valued probability measure,
        written $\mu(\bullet|s)$,
        on $\mathcal O\times\mathcal R$.
        For every such $s$ and every $(o,r)\in\mathcal O\times\mathcal R$,
        we write $\mu(o,r|s)$ for $(\mu(\bullet|s))(o,r)$.
        Intuitively, $\mu(o,r|s)$ is the probability that environment
        $\mu$ will issue observation $o$ and reward $r$ to the agent in response
        to history $s$.
        \item
        If $\pi$ is an agent, $\mu$ is an environment, and $n\in\mathbb N$,
        we write $V^\pi_{\mu,n}$ for the expected value of the sum of
        the rewards which would occur in the sequence
        $(o_0,r_0,a_0,\ldots,o_n,r_n,a_n)$ randomly generated as follows:
        \begin{enumerate}
            \item $(o_0,r_0)\in \mathcal O\times\mathcal R$ is chosen randomly based
            on the probability measure $\mu(\bullet|\langle\rangle)$.
            \item $a_0\in\mathcal A$ is chosen randomly based on the probability
            measure $\pi(\bullet|o_0,r_0)$.
            \item
            For each $i>0$,
            $(o_i,r_i)\in\mathcal O\times\mathcal R$ is chosen randomly based on
            the probability measure
            $\mu(\bullet|o_0,r_0,a_0,\ldots,o_{i-1},r_{i-1},a_{i-1})$.
            \item
            For each $i>0$,
            $a_i\in\mathcal A$ is chosen randomly based on the probability measure
            $\pi(\bullet|o_0,r_0,a_0,\ldots,o_{i-1},r_{i-1},a_{i-1},o_i,r_i)$.
        \end{enumerate}
        \item
        If $\pi$ is an agent and $\mu$ is an environment,
        let $V^\pi_\mu=\lim_{n\to\infty}V^{\pi}_{\mu,n}$.
        Intuitively, $V^\pi_\mu$ is the expected total reward which $\pi$ would extract
        from $\mu$.
    \end{enumerate}
\end{mydefinition}

Note that it is possible for $V^\pi_\mu$ to be undefined.
For example, if $\mu$ is an environment which always issues
reward $(-1)^n$ in response to the agent's $n$th action,
then $V^\pi_\mu$ is undefined for every agent $\pi$.
This would not be the case if rewards were required to be $\geq 0$,
so this is one way in which allowing
punishments complicates the resulting theory.

\begin{mydefinition}
    An environment $\mu$ is \emph{well-behaved} if $\mu$ is computable and the following
    condition holds: for every agent $\pi$, $V^\pi_\mu$ exists and
    $-1\leq V^\pi_\mu\leq 1$.
\end{mydefinition}

Note that reward-space $[0,1]$ can be transformed into punishment-space
$[-1,0]$ either via $r\mapsto -r$ or via $r\mapsto r-1$.
An advantage of $r\mapsto -r$ is that it preserves well-behavedness of
environments (we prove this below in
Corollary \ref{wisminuswcorollary})\footnote{It
is worth mentioning another difference
between these two transforms. The hypothetical agent
$\mathrm{AI}_\mu$ with perfect knowledge of the environment's reward distribution
would not change its behavior in response to $r\mapsto r-1$ (nor indeed
in response to any positive linear scaling $r\mapsto ar+b$, $a>0$), but it would
generally change its behavior in response to $r\mapsto -r$. Interestingly, this
behavior invariance
with respect to $r\mapsto r-1$ would not hold if $\mathrm{AI}_\mu$ were capable of
``suicide'' (deliberately ending the environmental interaction):
one should never quit a slot machine that always pays between $0$ and $1$
dollars, but one should immediately quit a slot machine that always pays
between $-1$ and $0$ dollars.
The agent
AIXI also changes behavior in response to $r\mapsto r-1$, and it was recently argued
that
this can be interpreted in terms of suicide/death: AIXI models its environment using a mixture
distribution over a countable class of semimeasures, and AIXI's behavior can be interpreted
as treating the complement of the domain of each semimeasure as death,
see \cite{martin2016death}.}.



\section{Dual Agents and Dual Environments}
\label{dualsection}

In the Introduction, we promised that by allowing environments to punish agents,
we would reveal algebraic structure not otherwise present. The key to this additional
structure is the following definition.

\begin{mydefinition}
(Dual Agents and Dual Environments)
\begin{enumerate}
    \item
    For each sequence $s$, let $\overline s$ be the sequence obtained
    by replacing every reward $r$ in $s$ by $-r$.
    \item
    Suppose $\pi$ is an agent.
    We define a new agent $\overline \pi$, the \emph{dual} of $\pi$,
    as follows:
    for each $s\in (\mathcal O\mathcal R\mathcal A)^*\mathcal O\mathcal R$,
    for each action $a\in\mathcal A$,
    \[\overline\pi(a|s)=\pi(a|\overline s).\]
    \item
    Suppose $\mu$ is an environment.
    We define a new environment $\overline\mu$, the \emph{dual} of $\mu$,
    as follows:
    for each $s\in (\mathcal O\mathcal R\mathcal A)^*$,
    for each observation $o\in\mathcal O$
    and reward $r\in\mathcal R$,
    \[\overline\mu(o,r|s)=\mu(o,-r|\overline s).\]
\end{enumerate}
\end{mydefinition}

\begin{mylemma}
\label{doublesubtractionlemma}
(Double Negation)
If $x$ is a sequence, agent, or environment, then $\overline{\overline x}=x$.
\end{mylemma}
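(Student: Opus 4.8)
The plan is to prove the three cases of the double negation lemma in order, starting with sequences, since both the agent and environment cases reduce to the sequence case once the action of the overline operation on sequences is understood.

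First I would verify the claim for sequences. By definition, $\overline s$ is obtained from $s$ by replacing every reward $r$ by $-r$, leaving observations and actions untouched. Applying the overline operation a second time replaces each reward (which now equals $-r$ for an original reward $r$) by its negation $-(-r)=r$, and again leaves observations and actions fixed. Since negation on the rationals is an involution, $\overline{\overline s}=s$ follows termwise. I should note in passing that this is well-defined: because $\mathcal R$ is closed under $r\mapsto -r$, the sequence $\overline s$ is still a legitimate sequence over $\mathcal O,\mathcal R,\mathcal A$, and moreover $\overline s$ lies in the same pattern-class (i.e.\ in $(\mathcal O\mathcal R\mathcal A)^*$ or in $(\mathcal O\mathcal R\mathcal A)^*\mathcal O\mathcal R$) as $s$, since the overline operation does not disturb the observation/reward/action pattern.

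Next I would handle the agent case. Fix an agent $\pi$ and any $s\in(\mathcal O\mathcal R\mathcal A)^*\mathcal O\mathcal R$ and $a\in\mathcal A$. Unwinding the definition of the dual twice gives
\[
\overline{\overline\pi}(a\mid s)=\overline\pi(a\mid \overline s)=\pi(a\mid \overline{\overline s}),
\]
and by the already-established sequence case $\overline{\overline s}=s$, so this equals $\pi(a\mid s)$. Since $s$ and $a$ were arbitrary, the agents $\overline{\overline\pi}$ and $\pi$ agree on every input, hence are equal. The environment case is entirely parallel: for an environment $\mu$, any $s\in(\mathcal O\mathcal R\mathcal A)^*$, observation $o$, and reward $r$, I would compute
\[
\overline{\overline\mu}(o,r\mid s)=\overline\mu(o,-r\mid \overline s)=\mu\bigl(o,-(-r)\bigm|\overline{\overline s}\bigr)=\mu(o,r\mid s),
\]
again using $-(-r)=r$ and the sequence case $\overline{\overline s}=s$.

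I do not anticipate a genuine obstacle here, as the lemma is essentially the observation that composing an involution (negation on $\mathbb Q$) with itself is the identity, propagated through the definitions. The only point requiring mild care is bookkeeping: one must apply the sequence-level double negation inside the conditioning argument at the same time as cancelling the double negation on the reward coordinate in the environment case, and one must confirm at the outset that the overline operation maps each domain ($(\mathcal O\mathcal R\mathcal A)^*$ and $(\mathcal O\mathcal R\mathcal A)^*\mathcal O\mathcal R$) into itself so that the iterated expressions are well-typed. Once that is noted, the three cases follow by direct substitution.
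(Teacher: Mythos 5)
Your proof is correct and follows the same route as the paper, which compresses the entire argument to the single observation that $--r=r$ for every real number $r$; your version simply unwinds the definitions of $\overline{s}$, $\overline{\pi}$, and $\overline{\mu}$ to propagate that involution explicitly, with the well-typing remarks (closure of $\mathcal R$ under negation, preservation of the pattern-class) being sound but routine bookkeeping the paper leaves implicit.
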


\begin{proof}
    Follows from the fact that for every real number $r$, $--r=r$.
    \qed
\end{proof}

\begin{mytheorem}
\label{bigtheorem}
    Suppose $\mu$ is an environment and $\pi$ is an agent.
    Then
    \[
        V^{\overline \pi}_{\overline \mu}=-V^\pi_\mu
    \]
    (and the left-hand side is defined if and only if the right-hand side is defined).
\end{mytheorem}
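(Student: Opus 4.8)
The plan is to exhibit a probability-preserving bijection between the histories generated by the $(\pi,\mu)$ interaction and those generated by the $(\overline\pi,\overline\mu)$ interaction, namely the reward-negation map $h\mapsto\overline h$, and then to observe that this map negates the reward sum. Fix $n$ and consider a complete history $h=(o_0,r_0,a_0,\ldots,o_n,r_n,a_n)$. Writing $h_{<i}=(o_0,r_0,a_0,\ldots,o_{i-1},r_{i-1},a_{i-1})$ for its proper prefix (with $h_{<0}=\langle\rangle$), the probability that the $(\pi,\mu)$ process generates $h$ factors, directly from the step-by-step sampling in Definition \ref{omnibusdefn}, as
\[
P_{\pi,\mu}(h)=\prod_{i=0}^{n}\mu(o_i,r_i|h_{<i})\cdot\prod_{i=0}^{n}\pi(a_i|h_{<i}\frown o_i\frown r_i).
\]

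First I would compute $P_{\overline\pi,\overline\mu}(\overline h)$, the probability that the dual process generates the negated history $\overline h=(o_0,-r_0,a_0,\ldots,o_n,-r_n,a_n)$. Since $\overline h_{<i}=\overline{h_{<i}}$, applying the definitions of $\overline\mu$ and $\overline\pi$ together with the Double Negation Lemma (Lemma \ref{doublesubtractionlemma}) gives $\overline\mu(o_i,-r_i|\overline{h_{<i}})=\mu(o_i,r_i|h_{<i})$ and $\overline\pi(a_i|\overline{h_{<i}}\frown o_i\frown(-r_i))=\pi(a_i|h_{<i}\frown o_i\frown r_i)$ for each $i$. Multiplying these factorizations term by term shows $P_{\overline\pi,\overline\mu}(\overline h)=P_{\pi,\mu}(h)$, so $h\mapsto\overline h$ is a probability-preserving bijection on histories of each fixed length (its inverse is itself, again by double negation).

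Next, since the rewards occurring in $\overline h$ are exactly $-r_0,\ldots,-r_n$, the total reward along $\overline h$ is $-\sum_{i=0}^n r_i$. As $V^\pi_{\mu,n}$ is by definition the expectation $\sum_h P_{\pi,\mu}(h)\sum_{i=0}^n r_i$, reindexing the sum over histories via the bijection yields
\[
V^{\overline\pi}_{\overline\mu,n}=\sum_{h}P_{\overline\pi,\overline\mu}(\overline h)\Bigl(-\sum_{i=0}^n r_i\Bigr)=-\sum_{h}P_{\pi,\mu}(h)\sum_{i=0}^n r_i=-V^\pi_{\mu,n}
\]
for every $n$. Taking $n\to\infty$ gives $V^{\overline\pi}_{\overline\mu}=-V^\pi_\mu$; and because the two partial sums are exact negatives at each finite stage, one limit exists precisely when the other does, which settles the definedness clause.

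I expect the only delicate step to be the bookkeeping in the second paragraph: one must verify that the prefix of $\overline h$ is genuinely the negation of the prefix of $h$, and then invoke the Double Negation Lemma in exactly the right place so that the dual $\mu$- and $\pi$-factors collapse back to the original ones. Everything after that is routine rearrangement of a finite sum.
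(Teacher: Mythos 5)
Your proposal is correct and is essentially the paper's own argument: the paper establishes the same identity $P_{\pi,\mu}(s)=P_{\overline\pi,\overline\mu}(\overline s)$ by induction on the length of $s$, which is just your explicit product factorization in unrolled form, and then both arguments conclude via the negated reward sum and the limit $n\to\infty$. Your handling of the definedness clause (exact negation at every finite stage) matches the paper's reduction to $V^{\overline\pi}_{\overline\mu,n}=-V^{\pi}_{\mu,n}$ as well.
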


\begin{proof}
    By Definition \ref{omnibusdefn} part 6,
    it suffices to show that for each $n\in\mathbb N$,
    $V^{\overline \pi}_{\overline \mu,n}=-V^\pi_{\mu,n}$.
    For that, it suffices to show that for every
    $s\in ((\mathcal O\mathcal R\mathcal A)^*)
    \cup ((\mathcal O\mathcal R\mathcal A)^*\mathcal O\mathcal R)$,
    the probability $X$ of generating $s$ using $\pi$ and $\mu$
    (as in Definition \ref{omnibusdefn} part 5)
    equals the probability $X'$ of generating $\overline s$
    using $\overline \pi$ and $\overline \mu$.
    We will show this by induction on the length of $s$.

    Case 1: $s$ is empty. Then $X=X'=1$.

    Case 2: $s$ terminates with an action.
    Then $s=t\frown a$ for some $t\in(\mathcal O\mathcal R\mathcal A)^*\mathcal O\mathcal R$.
    Let $Y$ (resp.\ $Y'$) be the probability of generating $t$
    (resp.\ $\overline t$)
    using $\pi$ and $\mu$ (resp.\ $\overline\pi$ and $\overline\mu$).
    We reason: $X=\pi(a|t)Y=\pi(a|\overline{\overline t})Y=\overline \pi(a|\overline t)Y$
    by definition of $\overline\pi$. By induction, $Y=Y'$,
    so $X=\overline\pi(a|\overline t)Y'$, which by definition is $X'$.

    Case 3: $s$ terminates with a reward.
    Similar to Case 2.
    \qed
\end{proof}

\begin{mycorollary}
\label{twistcorollary}
    For every agent $\pi$ and environment $\mu$,
    \[V^\pi_{\overline\mu}=-V^{\overline\pi}_{\mu}\]
    (and the left-hand side is defined if and only if the right-hand side is defined).
\end{mycorollary}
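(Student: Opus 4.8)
The plan is to derive Corollary \ref{twistcorollary} directly from Theorem \ref{bigtheorem} by exploiting the involution property established in Lemma \ref{doublesubtractionlemma}. The corollary looks almost identical to the theorem, the only difference being which of the agent/environment pair carries the bar. So the natural strategy is substitution: apply the already-proven theorem to cleverly chosen inputs and then simplify using double negation.

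Let me think about what to substitute. Theorem \ref{bigtheorem} states $V^{\overline\pi}_{\overline\mu} = -V^\pi_\mu$ for every agent $\pi$ and environment $\mu$. I want to conclude $V^\pi_{\overline\mu} = -V^{\overline\pi}_\mu$.

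First I would note that the theorem is a universal statement, so I am free to instantiate it at any agent and environment I like. The idea is to feed in $\overline\pi$ in place of $\pi$, keeping $\mu$ as is. Applying the theorem to the agent $\overline\pi$ and environment $\mu$ gives $V^{\overline{\overline\pi}}_{\overline\mu} = -V^{\overline\pi}_\mu$. Now by Lemma \ref{doublesubtractionlemma}, $\overline{\overline\pi} = \pi$, so the left-hand side is exactly $V^\pi_{\overline\mu}$, which is precisely what the corollary asserts.

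So the key steps, in order, are: (1) invoke Theorem \ref{bigtheorem} with the agent $\overline\pi$ substituted for $\pi$; (2) rewrite $\overline{\overline\pi}$ as $\pi$ using Lemma \ref{doublesubtractionlemma}; (3) read off the resulting equality. The definedness clause transfers for free, since the theorem already carries the ``defined iff defined'' guarantee, and double negation is just a syntactic identity on the agent that does not affect whether the value series converges.

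I do not anticipate a genuine obstacle here; this is essentially a one-line corollary once the substitution is spotted. The only thing to be careful about is picking the right quantity to substitute and in the right slot: substituting into the agent argument (rather than the environment) is what makes the bars land where the corollary wants them. An alternative route would substitute $\overline\mu$ for $\mu$ instead, which would yield the mirror-image statement; either works by symmetry, but I would present the agent-substitution version since it reaches the stated form most directly.
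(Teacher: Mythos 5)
Your proposal is correct and matches the paper's own proof essentially exactly: the paper likewise instantiates Theorem \ref{bigtheorem} at the agent $\overline\pi$ and environment $\mu$, then collapses $\overline{\overline\pi}$ to $\pi$ via Lemma \ref{doublesubtractionlemma} (the paper just writes the chain starting from $V^\pi_{\overline\mu}$ rather than from the theorem instance). Your remark that the definedness clause transfers automatically through the theorem's ``defined iff defined'' guarantee is also sound, and if anything slightly streamlines the paper's explicit case split on which side is assumed defined.
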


\begin{proof}
    If neither side is defined, then there is nothing to prove.
    Assume the left-hand side is defined. Then
    \begin{align*}
        V^\pi_{\overline\mu}
        &= V^{\overline{\overline\pi}}_{\overline\mu}
            &\mbox{(Lemma \ref{doublesubtractionlemma})}\\
        &= -V^{\overline\pi}_\mu, &\mbox{(Theorem \ref{bigtheorem})}
    \end{align*}
    as desired. A similar argument holds if we assume the right-hand side is defined.
    \qed
\end{proof}

\begin{mycorollary}
\label{wisminuswcorollary}
    For every environment $\mu$, $\mu$ is well-behaved if and only if $\overline\mu$
    is well-behaved.
\end{mycorollary}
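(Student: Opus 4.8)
The plan is to prove the biconditional by recalling the definition of well-behavedness and establishing a clean bijection between the agents interacting with $\mu$ and those interacting with $\overline\mu$. Recall that an environment $\nu$ is well-behaved exactly when $\nu$ is computable and, for every agent $\pi$, the value $V^\pi_\nu$ exists and satisfies $-1\leq V^\pi_\nu\leq 1$. Since the two conditions (computability and the value bounds) are conjoined, I would verify each separately under the duality operation.

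First I would handle computability. Because the dual operation simply composes $\mu$ with the reward-flipping map $r\mapsto -r$ on both the input history and the output reward, and this flipping is itself a trivial computable operation on the finite reward set $\mathcal R$, the dual $\overline\mu$ is computable whenever $\mu$ is. By the Double Negation Lemma (Lemma \ref{doublesubtractionlemma}), $\overline{\overline\mu}=\mu$, so the converse direction of the computability claim follows symmetrically.

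Next I would handle the value-bound condition, which is where Theorem \ref{bigtheorem} does the real work. Suppose $\mu$ is well-behaved; I want to show every agent $\pi$ satisfies that $V^\pi_{\overline\mu}$ exists and lies in $[-1,1]$. The key observation is that the assignment $\pi\mapsto\overline\pi$ is a bijection on the set of all agents (again by Double Negation, it is its own inverse), so as $\pi$ ranges over all agents, so does $\overline\pi$. Given an arbitrary agent $\pi$, apply Theorem \ref{bigtheorem} (or more directly Corollary \ref{twistcorollary}) to write $V^\pi_{\overline\mu}=-V^{\overline\pi}_\mu$. Since $\mu$ is well-behaved, $V^{\overline\pi}_\mu$ exists and lies in $[-1,1]$; negation maps $[-1,1]$ onto itself, so $V^\pi_{\overline\mu}$ exists and also lies in $[-1,1]$. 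As $\pi$ was arbitrary, this establishes that $\overline\mu$ is well-behaved.

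For the converse, rather than repeating the argument, I would simply invoke symmetry: applying the forward direction to $\overline\mu$ shows that $\overline\mu$ well-behaved implies $\overline{\overline\mu}=\mu$ well-behaved. I do not anticipate a serious obstacle here, since all the analytic content (existence of the limit and preservation of the bound under negation) is already packaged into Theorem \ref{bigtheorem} and Corollary \ref{twistcorollary}; the only point requiring slight care is to use the ``defined iff defined'' clause of those results so that the existence of $V^\pi_{\overline\mu}$ (not merely its value) transfers correctly, and to remember that the universal quantifier over agents is preserved precisely because dualization is a bijection on agents.
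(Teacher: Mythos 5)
Your proposal is correct and follows essentially the same route as the paper's own proof: fix an arbitrary agent $\pi$, use well-behavedness of $\mu$ applied to $\overline\pi$, and transfer existence and the bound via Corollary \ref{twistcorollary} (the paper handles the converse by noting it is ``similar,'' which matches your double-negation argument via Lemma \ref{doublesubtractionlemma}). Your extra remarks about computability of reward-flipping and dualization being an involution on agents only make explicit what the paper leaves implicit.
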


\begin{proof}
    We prove the $\Rightarrow$ direction, the other is similar.
    Since $\mu$ is well-behaved, $\mu$ is computable, so clearly $\overline\mu$ is computable.
    Let $\pi$ be any agent. Since $\mu$ is well-behaved, $V^{\overline\pi}_\mu$ is defined
    and $-1\leq V^{\overline\pi}_\mu\leq 1$.
    By Corollary \ref{twistcorollary},
    $V^\pi_{\overline\mu}=-V^{\overline\pi}_\mu$ is defined,
    implying $-1\leq V^\pi_{\overline\mu}\leq 1$.
    By arbitrariness of $\pi$, this shows $\overline\mu$ is well-behaved.
    \qed
\end{proof}

\section{Symmetric Intelligence}
\label{mainsecn}

Agent $\overline\pi$ acts as agent $\pi$ would act if $\pi$
confused punishments with rewards and rewards with punishments.
Whatever ingenuity $\pi$ applies to maximize rewards, $\overline\pi$ applies that same
ingenuity to maximize punishments. Thus, if $\Upsilon$ measures intelligence as
performance averaged in some way\footnote{Note that measuring intelligence as
averaged performance
might conflict with certain everyday uses of the word ``intelligent'',
see Section \ref{absvaluesection}.}, it seems natural that we might expect the
following property to hold ($*$): that whenever $\Upsilon(\pi)\not=0$,
then $\Upsilon(\pi)\not=\Upsilon(\overline\pi)$. Indeed, one could argue it
would be strange to hold that $\pi$ manages to extract (say) positive rewards on
average, and at the same time hold that $\overline\pi$ (which uses $\pi$ to seek
punishments) extracts the exact same positive rewards on average. To be clear, we
do not declare ($*$) is an absolute law, we merely opine that ($*$) seems
reasonable and natural. Now, assuming ($*$), we can offer an informal argument
for a stronger-looking symmetry property ($**$): that
$\Upsilon(\pi)=-\Upsilon(\overline\pi)$ for all $\pi$. The informal argument is
as follows. Let $\pi$ be any agent. Imagine a new agent $\rho$ which, at
the start of every environmental interaction, flips a coin and commits to act as
$\pi$ for that whole interaction if the coin lands heads, or to act as
$\overline\pi$ for the whole interaction if the coin lands tails.
Probabilistic intuition suggests
$\Upsilon(\rho)=\frac12(\Upsilon(\pi)+\Upsilon(\overline \pi))$, so if
$\Upsilon(\rho)=0$ then $\Upsilon(\pi)=-\Upsilon(\overline\pi)$.
But maybe the reader doubts $\Upsilon(\rho)=0$. In that case, define
$\rho'$ in the same way except swap ``heads'' and ``tails''. It seems
there is no way to meaningfully distinguish $\rho$ from $\rho'$, so it
seems we ought to have $\Upsilon(\rho)=\Upsilon(\rho')$. But to swap ``heads''
and ``tails'' is the same as to swap ``$\pi$'' and ``$\overline \pi$''.
Thus $\rho'=\overline\rho$. Thus $\Upsilon(\rho)\not=0$ would contradict ($*$).
In conclusion, while we do not declare it an absolute law, we do consider
($**$) natural and reasonable, at least if $\Upsilon$
measures intelligence as performance averaged in some way.
In this section, we will show that Legg and Hutter's universal intelligence measure
satisfies ($**$), provided a background UTM and encoding are suitably chosen.

We write $2^*$ for the set of finite binary strings.
We write $f:\subseteq A\to B$ to indicate that $f$ has codomain $B$
and that $f$'s domain is some subset of $A$.

\begin{mydefinition}
    (Prefix-free universal Turing machines)
    \begin{enumerate}
        \item A partial computable function $f:\subseteq 2^*\to 2^*$
        is \emph{prefix-free} if the following requirement holds:
        $\forall p,p'\in 2^*$, if $p$ is a strict initial segment of $p'$,
        then $f(p)$ and $f(p')$ are not both defined.
        \item A \emph{prefix-free universal Turing machine}
        (or \emph{PFUTM}) is a prefix-free
        partial computable function $U:\subseteq 2^*\to 2^*$
        such that the following condition holds.
        For every prefix-free partial computable function
        $f:\subseteq 2^*\to 2^*$, $\exists y\in 2^*$ such that
        $\forall x\in 2^*$, $f(x)=U(y\frown x)$.
        In this case, we say $y$ is a \emph{computer program for
        $f$ in programming language $U$}.
    \end{enumerate}
\end{mydefinition}

Environments do not have domain $\subseteq 2^*$, and they do not
have codomain $2^*$.
Rather, their domain and codomain are $(\mathcal O\mathcal R\mathcal A)^*$
and the set of $\mathbb Q$-valued probability measures on $\mathcal O\times\mathcal R$,
respectively.
Thus, in order to talk about their Kolmogorov complexities,
one must encode said inputs and outputs. This low-level detail is
usually implicit, but we will need (in Theorem \ref{envirosymmetricexistencelemma})
to distinguish between different kinds of encodings, so we must make the
details explicit.

\begin{mydefinition}
\label{encodingdefn}
    By an \emph{RL-encoding} we mean a computable function
    $\sqcap:(\mathcal O\mathcal R\mathcal A)^*\cup M\to 2^*$
    (where $M$ is the set of $\mathbb Q$-valued probability-measures
    on $\mathcal O\times\mathcal R$) such that
    for all $x,y\in (\mathcal O\mathcal R\mathcal A)^*\cup M$ (with $x\not=y$),
    $\sqcap(x)$ is not an initial segment of $\sqcap(y)$.
    We say $\sqcap$ is
    \emph{suffix-free} if
    for all $x,y\in (\mathcal O\mathcal R\mathcal A)^*\cup M$ (with $x\not=y$),
    $\sqcap(x)$ is not a terminal segment of $\sqcap(y)$.
    We write $\ulcorner x\urcorner$ for $\sqcap(x)$.
\end{mydefinition}

Note that in Definition \ref{encodingdefn}, it makes sense to encode $M$ because
$\mathcal O$ and $\mathcal R$ are finite (Section \ref{prelimsecn}).
Notice that suffix-freeness is, in some sense, the reverse of prefix-freeness.
The existence of encodings that are simultaneously prefix-free and suffix-free
is well-known. For example, elements of the range of $\sqcap$ could be
composed of 8-bit blocks
(bytes),
such that every element of the range of $\sqcap$ begins and ends with the ASCII
closed-bracket characters
$[$ and $]$, respectively, and such that these closed-brackets
do not appear anywhere in the
middle.

\begin{mydefinition}
(Kolmogorov Complexity)
Suppose $U$ is a PFUTM and $\sqcap$ is an RL-encoding.
\begin{enumerate}
    \item
    For each computable environment $\mu$, the \emph{Kolmogorov complexity of $\mu$
    given by $U,\sqcap$}, written $K^\sqcap_U(\mu)$,
    is the smallest $n\in\mathbb N$ such that
    there is some computer program of length $n$, in programming language $U$,
    for some function $f:\subseteq 2^*\to 2^*$ such that
    for all $s\in (\mathcal O\mathcal R\mathcal A)^*$,
    $f(\ulcorner s\urcorner)=\ulcorner \mu(\bullet|s)\urcorner$
    (note this makes sense since the domain of $\sqcap$ in Definition \ref{encodingdefn}
    is $(\mathcal O\mathcal R\mathcal A)^*\cup M$).
    \item
    We say $U$ is \emph{symmetric in its $\sqcap$-encoded-environment
    cross-section} (or simply that $U$ is \emph{$\sqcap$-symmetric}) if
    $K^\sqcap_U(\mu)=K^\sqcap_U(\overline\mu)$ for every computable environment $\mu$.
\end{enumerate}
\end{mydefinition}

\begin{mytheorem}
\label{envirosymmetricexistencelemma}
    For every suffix-free RL-encoding $\sqcap$,
    there exists a $\sqcap$-symmetric PFUTM.
\end{mytheorem}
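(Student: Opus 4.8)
The plan is to construct a $\sqcap$-symmetric PFUTM $U$ starting from an arbitrary PFUTM $U_0$. The key observation is that the dual operation $\mu\mapsto\overline\mu$ is an involution (Lemma \ref{doublesubtractionlemma}), and I want to build $U$ so that every program computing $\mu$ can be mechanically converted, at no change in length, into a program computing $\overline\mu$. First I would define, at the level of encoded strings, a "reward-flipping" transformation on $2^*$: since $\sqcap$ is an RL-encoding, every input $\ulcorner s\urcorner$ and output $\ulcorner\mu(\bullet|s)\urcorner$ that a program for $\mu$ ever sees is a valid codeword, and I can compute $\ulcorner\overline s\urcorner$ from $\ulcorner s\urcorner$ (and $\ulcorner\overline{(\bullet)}\urcorner$ of a measure from its code) by a total computable function, because $\sqcap$ is computable and its range is decidable. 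Call this map $\tau:\subseteq 2^*\to 2^*$; the fact that $s\mapsto\overline s$ is a length-preserving bijection on sequences, together with the analogous statement for measures, makes $\tau$ a partial computable involution that permutes the codewords and fixes everything else.

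The heart of the argument is then to define $U$ so that programs come in two syntactically identical flavors whose only difference is a single flag bit that tells $U$ whether to apply $\tau$ on the way in and on the way out. Concretely, I would let $U$ read a one-bit prefix $b$, then simulate $U_0$ on the remaining program $q$; if $b=0$, $U(bq)=U_0(q)$, while if $b=1$, $U(bq)=\tau(U_0(\tau(x)))$ when the input to the simulated machine is $x$. The point is that if $q$ is a program for a function $f$ computing $\mu$ in $U_0$-sense, meaning $f(\ulcorner s\urcorner)=\ulcorner\mu(\bullet|s)\urcorner$ for all $s$, then the flagged program $1q$ computes a function $g$ with $g(\ulcorner s\urcorner)=\tau(f(\tau(\ulcorner s\urcorner)))=\tau(f(\ulcorner\overline s\urcorner))=\tau(\ulcorner\mu(\bullet|\overline s)\urcorner)=\ulcorner\overline\mu(\bullet|s)\urcorner$, using the definition of $\overline\mu$ in the last step. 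Since $|0q|=|1q|=1+|q|$, this gives a length-preserving bijection between programs for $\mu$ and programs for $\overline\mu$, which forces $K^\sqcap_U(\mu)=K^\sqcap_U(\overline\mu)$, and by arbitrariness of $\mu$ the desired symmetry.

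Two technical points need care, and I expect the prefix-freeness to be the main obstacle. First, I must verify $U$ is genuinely a PFUTM: prepending a fixed bit to each program preserves prefix-freeness of the program space, and universality is inherited from $U_0$ since any prefix-free partial computable $f$ has a $U_0$-program $q$ and hence the $U$-program $0q$; I would package this so that the universality quantifier ranges over all prefix-free $f$, not merely over the reward-flipped ones. Second---and this is where suffix-freeness of $\sqcap$ is precisely what I need---I must ensure that $\tau$ is well-defined and total on the inputs that actually arise: the machine $U_0$ on input $x$ may read $x$ incrementally, and for the involution trick to respect prefix-freeness of $U_0$'s behavior I need the codeword structure to be recoverable unambiguously from either end, so that replacing each codeword $\ulcorner s\urcorner$ by $\ulcorner\overline s\urcorner$ inside a concatenated argument is unambiguous regardless of how $U_0$ parses its tape. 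Suffix-freeness guarantees the codeword boundaries cannot be confused when the flipping map is applied, which is exactly why the theorem hypothesizes a suffix-free, rather than merely prefix-free, RL-encoding. Once these verifications are in place, the length-preserving bijection closes the argument.
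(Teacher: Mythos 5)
Your construction is essentially the paper's own proof: a one-bit flag prepended to $U_0$-programs (the paper's $\mathrm{POS}$/$\mathrm{NEG}$ keywords), where the flagged branch uses suffix-freeness of $\sqcap$ to locate the unique encoded argument $\ulcorner s\urcorner$ at the end of the undelimited program-argument string, replaces it by $\ulcorner \overline{s}\urcorner$ before simulating $U_0$, and flips the output measure, so that the length-preserving pairing $0q\leftrightarrow 1q$ forces $K^\sqcap_U(\mu)=K^\sqcap_U(\overline\mu)$. One inessential quibble: you do not need (and should not claim) that the range of $\sqcap$ is decidable---it suffices to search over terminal segments for a codeword and diverge when none is found, exactly as the paper's construction does.
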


\begin{proof}
    Let $U_0$ be a PFUTM, we will modify $U_0$ to obtain a $\sqcap$-symmetric PFUTM.
    For readability's sake, write $\mathrm{POS}$ for $0$ and $\mathrm{NEG}$ for $1$.
    Thinking of $U_0$ as a programming language, we define a new programming language
    $U$ as follows. Every program in $U$ must begin with one of the keywords
    $\mathrm{POS}$ or $\mathrm{NEG}$. Outputs of $U$ are defined as follows.
    \begin{itemize}
        \item $U(\mathrm{POS}\frown x)=U_0(x)$.
        \item To compute $U(\mathrm{NEG}\frown x)$, find
        $s\in (\mathcal O \mathcal R\mathcal A)^*$ such that
        $x=y\frown \ulcorner s\urcorner$ for some $y$ (if no such $s$ exists, diverge).
        Note that $s$ is unique by suffix-freeness of $\sqcap$.
        If $
            U_0(y\frown
            \ulcorner \hspace{.2em} \overline s\hspace{.2em}\urcorner)
            =\ulcorner m\urcorner
        $
        for some $\mathbb Q$-valued probability-measure $m$ on
        $\mathcal O\times\mathcal R$, then let
        $U(\mathrm{NEG}\frown x)=\ulcorner\hspace{.2em} \overline m\hspace{.2em}\urcorner$
        where $\overline m(o,r)=m(o,-r)$.
        Otherwise, diverge.
        \begin{itemize}
            \item Informally:
            If $x$ appears to be an instruction to plug $s$ into computer
            program $y$ to get a probability measure $\mu(\bullet|s)$, then
            instead plug $\overline s$ into $y$ and flip the resulting
            probability measure so that the output
            ends up being the flipped version of $\mu(\bullet|\overline s)$,
            i.e., $\overline \mu(\bullet|s)$.
        \end{itemize}
    \end{itemize}
    By construction, whenever $\mathrm{POS}\frown y$ is a $U$-computer program
    for a function $f$ satisfying
    $f(\ulcorner s\urcorner)=\ulcorner \mu(\bullet|s)\urcorner$,
    $\mathrm{NEG}\frown y$ is an equal-length $U$-computer program
    for a function $g$ satisfying
    $g(\ulcorner s\urcorner)=\ulcorner \overline\mu(\bullet|s)\urcorner$,
    and vice versa.
    It follows that $U$ is $\sqcap$-symmetric.
    \qed
\end{proof}

The proof of Theorem \ref{envirosymmetricexistencelemma} proves more than required:
any PFUTM can be modified to make a $\sqcap$-symmetric PFUTM
if $\sqcap$ is suffix-free. In some sense,
the construction in the proof of Theorem \ref{envirosymmetricexistencelemma} works
by eliminating bias: reinforcement learning itself is implicitly biased in its
convention that rewards be positive and punishments negative. We can imagine
a pessimistic parallel universe
where RL instead follows the opposite convention, and the
RL in that parallel universe is no less valid than the RL in our own. To be
unbiased in this sense, a computer program defining an environment
should specify which of the two RL conventions it is operating under (hence the
$\mathrm{POS}$ and $\mathrm{NEG}$ keywords). This trick of using an initial bit
to indicate reward-reversal was previously used in \cite{legg2013approximation}.

\begin{mydefinition}
    Let $W$ be the set of all well-behaved environments.
    Let $\overline W=\{\overline\mu\,:\,\mu\in W\}$.
\end{mydefinition}



\begin{mydefinition}
\label{universalintelligencedefn}
For every PFUTM $U$, RL-encoding $\sqcap$, and agent $\pi$,
the \emph{Legg-Hutter universal intelligence of $\pi$ given
by $U,\sqcap$}, written $\LH^\sqcap_U(\pi)$, is
\[
    \LH^\sqcap_U(\pi) = \sum_{\mu \in W} 2^{-K^\sqcap_U(\mu)}V^\pi_\mu.
\]
\end{mydefinition}

The sum defining $\LH^\sqcap_U(\pi)$ is absolutely convergent
by comparison with the summands defining Chaitin's constant
(hence the prefix-free UTM requirement).
Thus a well-known
theorem from calculus says the sum does not depend on which order the $\mu\in W$
are enumerated.

Legg-Hutter intelligence has been accused of being subjective
because of its
UTM-sensitivity \cite{leike2015bad} \cite{hernandez2015c} \cite{hibbard2009bias}.
More optimistically,
UTM-sensitivity could be considered a feature, reflecting the
existence of many kinds of intelligence. It could be used to measure
intelligence in various contexts, by choosing UTMs appropriately.
One could even use it to measure, say, chess intelligence,
by choosing a UTM where chess-related environments are easiest to program.

\begin{mytheorem}
\label{maintheorem}
(Symmetry about the origin)
    For every RL-encoding $\sqcap$,
    every $\sqcap$-symmetric PFUTM $U$, and every agent $\pi$,
    \[
        \LH^\sqcap_U(\overline\pi) = -\LH^\sqcap_U(\pi).
    \]
\end{mytheorem}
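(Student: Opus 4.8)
The plan is to unfold the definition of $\LH^\sqcap_U(\overline\pi)$, transport the dual from the agent onto the environment using the results of Section \ref{dualsection}, and then reindex the resulting sum by the duality involution $\mu\mapsto\overline\mu$, exploiting $\sqcap$-symmetry to leave the complexity weights untouched. Concretely, I would start from
\[
    \LH^\sqcap_U(\overline\pi) = \sum_{\mu \in W} 2^{-K^\sqcap_U(\mu)}V^{\overline\pi}_\mu
\]
and apply Corollary \ref{twistcorollary} in the form $V^{\overline\pi}_\mu=-V^\pi_{\overline\mu}$ to each summand (these quantities are all defined, since $\mu\in W$ and, by Corollary \ref{wisminuswcorollary}, $\overline\mu\in W$), pulling the common sign out front to obtain $-\sum_{\mu\in W}2^{-K^\sqcap_U(\mu)}V^\pi_{\overline\mu}$.

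Next I would use the hypothesis that $U$ is $\sqcap$-symmetric, i.e.\ $K^\sqcap_U(\mu)=K^\sqcap_U(\overline\mu)$, to rewrite the weight $2^{-K^\sqcap_U(\mu)}$ as $2^{-K^\sqcap_U(\overline\mu)}$, yielding $-\sum_{\mu\in W}2^{-K^\sqcap_U(\overline\mu)}V^\pi_{\overline\mu}$. Every summand now depends on $\mu$ only through $\overline\mu$, so the substitution $\nu=\overline\mu$ rewrites the sum as $-\sum_{\nu}2^{-K^\sqcap_U(\nu)}V^\pi_\nu$, where $\nu$ ranges over the image of $W$ under the dual map. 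To conclude that this image is exactly $W$ — so that the final sum is literally $-\LH^\sqcap_U(\pi)$ — I would invoke Lemma \ref{doublesubtractionlemma} and Corollary \ref{wisminuswcorollary}: double negation makes $\mu\mapsto\overline\mu$ an involution, hence injective with itself as inverse, and Corollary \ref{wisminuswcorollary} shows it carries $W$ into $W$; combining these, it is a bijection of $W$ onto itself.

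The main obstacle I anticipate is not any single algebraic manipulation but the legitimacy of the reindexing step: substituting $\nu=\overline\mu$ is a rearrangement of an infinite series, and such rearrangements can in general alter or destroy the sum. Here this is safe because, as noted just after Definition \ref{universalintelligencedefn}, the series defining $\LH^\sqcap_U$ is absolutely convergent (by comparison with the summands of Chaitin's constant), so any bijective reindexing preserves its value. I would therefore state explicitly that the rearrangement is justified by absolute convergence, after which the three displayed sums chain together to give $\LH^\sqcap_U(\overline\pi)=-\LH^\sqcap_U(\pi)$.
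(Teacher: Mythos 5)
Your proposal is correct and follows essentially the same route as the paper's proof: unfold the definition, apply Corollary \ref{twistcorollary}, use $\sqcap$-symmetry to swap the weight, and change variables via $\mu\mapsto\overline\mu$, with $W=\overline W$ by Corollary \ref{wisminuswcorollary}. Your explicit appeals to absolute convergence (for the reindexing) and to Lemma \ref{doublesubtractionlemma} (to see the dual map is an involutive bijection of $W$) merely spell out details the paper leaves implicit in its ``Change of variables'' step and in the remark following Definition \ref{universalintelligencedefn}.
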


\begin{proof}
    Compute:
    \begin{align*}
        \LH^\sqcap_U(\overline\pi) &= \sum_{\mu\in W} 2^{-K^\sqcap_U(\mu)}V^{\overline\pi}_\mu
            &\mbox{(Definition \ref{universalintelligencedefn})}\\
          &= -\sum_{\mu\in W} 2^{-K^\sqcap_U(\mu)}V^\pi_{\overline\mu}
            &\mbox{(Corollary \ref{twistcorollary})}\\
          &= -\sum_{\mu\in W} 2^{-K^\sqcap_U(\overline\mu)}V^\pi_{\overline\mu}
            &\mbox{($U$ is $\sqcap$-symmetric)}\\
          &= -\sum_{\mu\in \overline W} 2^{-K^\sqcap_U(\mu)}V^\pi_\mu
            &\mbox{(Change of variables)}\\
          &= -\sum_{\mu\in W} 2^{-K^\sqcap_U(\mu)}V^\pi_\mu
            &\mbox{(By Corollary \ref{wisminuswcorollary}, $W=\overline W$)}\\
          &= -\LH^\sqcap_{U}(\pi).
            &\mbox{(Definition \ref{universalintelligencedefn})}
    \end{align*}
    \qed
\end{proof}

The above desideratum, that $\Upsilon(\overline\pi)=-\Upsilon(\pi)$, applies to
numerical intelligence measures. If one is merely interested in binary
intelligence comparators (such as those in \cite{alexander2019intelligence}),
the desideratum can be weakened into a non-numerical comparator form:
    If $\pi$ is more intelligent than $\rho$,
    then $\overline\pi$ should be less intelligent than $\overline\rho$.
The following corollary addresses this desideratum.

\begin{mycorollary}
\label{comparatorcorollary}
    For every RL-encoding $\sqcap$, every $\sqcap$-symmetric PFUTM $U$,
    for all agents $\pi$ and $\rho$, if $\LH^\sqcap_U(\pi)>\LH^\sqcap_U(\rho)$
    then $\LH^\sqcap_U(\overline\pi)<\LH^\sqcap_U(\overline\rho)$.
\end{mycorollary}

\begin{proof}
    By Theorem \ref{maintheorem} and basic algebra. \qed
\end{proof}

The following corollary addresses another obvious desideratum.
This corollary is foreshadowed
in \cite{legg2013approximation}.

\begin{mycorollary}
\label{ignoringrewardscorollary}
    Let $\sqcap$ be an RL-encoding,
    let $U$ be a $\sqcap$-symmetric PFUTM and
    suppose $\pi$ is an agent which ignores rewards (by which we mean that
    $\pi(\bullet|s)$ does not depend on the rewards in $s$).
    Then $\LH^\sqcap_U(\pi)=0$.
\end{mycorollary}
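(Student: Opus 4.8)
The plan is to reduce the claim to Theorem \ref{maintheorem} by showing that a reward-ignoring agent is its own dual, i.e.\ $\overline\pi=\pi$. Once this self-duality is established, the corollary follows immediately: Theorem \ref{maintheorem} gives $\LH^\sqcap_U(\overline\pi)=-\LH^\sqcap_U(\pi)$, so substituting $\overline\pi=\pi$ yields $\LH^\sqcap_U(\pi)=-\LH^\sqcap_U(\pi)$, whence $2\LH^\sqcap_U(\pi)=0$ and therefore $\LH^\sqcap_U(\pi)=0$.

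To verify $\overline\pi=\pi$, I would fix an arbitrary history $s\in(\mathcal O\mathcal R\mathcal A)^*\mathcal O\mathcal R$ and compare $s$ with $\overline s$. By construction, $\overline s$ is obtained from $s$ by replacing each reward $r$ with $-r$, so $s$ and $\overline s$ agree in every observation and every action, differing (if at all) only in their reward coordinates. By the hypothesis that $\pi$ ignores rewards, $\pi(\bullet|s)$ depends only on the observations and actions occurring in $s$; hence $\pi(\bullet|\overline s)=\pi(\bullet|s)$. By the definition of the dual agent, $\overline\pi(a|s)=\pi(a|\overline s)=\pi(a|s)$ for every $a\in\mathcal A$, so $\overline\pi(\bullet|s)=\pi(\bullet|s)$. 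Since $s$ was arbitrary, $\overline\pi=\pi$.

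The only real content is the self-duality observation; the remainder is the algebraic cancellation afforded by the symmetry theorem. I do not anticipate a genuine obstacle, but the one step meriting care is the precise reading of ``ignores rewards'': one must confirm that the map $s\mapsto\overline s$ alters nothing except rewards, so that a reward-insensitive agent truly cannot distinguish $s$ from $\overline s$. This is immediate from the definition of $\overline s$, which touches only the reward coordinates, leaving the observation and action coordinates fixed.
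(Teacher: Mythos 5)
Your proposal is correct and follows exactly the paper's argument: the hypothesis gives $\pi=\overline\pi$, whence Theorem \ref{maintheorem} yields $\LH^\sqcap_U(\pi)=-\LH^\sqcap_U(\pi)=0$. You simply spell out the self-duality verification that the paper leaves implicit, which is fine.
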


\begin{proof}
    The hypothesis implies $\pi=\overline\pi$,
    so by Theorem \ref{maintheorem}, $\LH^\sqcap_U(\pi)=-\LH^\sqcap_U(\pi)$.
    \qed
\end{proof}

Corollary \ref{ignoringrewardscorollary} illustrates why it is appropriate, for
purposes of Legg-Hutter universal intelligence, to choose a $\sqcap$-symmetric
PFUTM\footnote{An answer to Leike and Hutter's
\cite{leike2015bad} ``But what are other desirable
properties of a UTM?''}.
Consider an agent $\pi_a$
which blindly repeats a fixed action $a\in\mathcal A$.
For any particular environment $\mu$,
where $\pi_a$ earns total reward $r$ by blind luck,
that total reward should be cancelled by $\overline\mu$, where
that blind luck becomes blind misfortune and $\pi_a$ earns total reward
$-r$ (Corollary \ref{twistcorollary}). If $K^\sqcap_U(\mu)\not=K^\sqcap_U(\overline\mu)$,
the different weights
$2^{-K^\sqcap_U(\mu)}\not=2^{-K^\sqcap_U(\overline\mu)}$ would prevent
cancellation.


We conclude this section with an exercise, suggesting how the techniques of this paper
can be used to obtain other structural results.

\begin{myexercise} (Permutations)
    \begin{enumerate}
        \item
        For each permutation $P:\mathcal A\to\mathcal A$ of the action-space,
        for each sequence $s$,
        let $Ps$ be the result of applying $P$ to all the actions in $s$.
        For each agent $\pi$, let $P\pi$ be the agent defined by
        $P\pi(a|s)=\pi(Pa|Ps)$. For each environment $\mu$, let
        $P\mu$ be the environment defined by
        $P\mu(o,r|s)=\mu(o,r|Ps)$. Show that in general
        $V^\pi_\mu = V^{P\pi}_{P^{-1}\mu}$
        and
        $V^{P\pi}_\mu = V^\pi_{P\mu}$.
        \item
        Say PFUTM $U$ is \emph{$\sqcap$-permutable} if
        $K^\sqcap_U(\mu)=K^\sqcap_U(P\mu)$ for every computable environment $\mu$
        and permutation $P:\mathcal A\to\mathcal A$. Show that if $\sqcap$ is
        suffix-free then any given
        PFUTM can be transformed into a $\sqcap$-permutable PFUTM.
        \item
        Show that if $U$ is a $\sqcap$-permutable PFUTM,
        then $\LH^\sqcap_U(P\pi)=\LH^\sqcap_U(\pi)$
        for every agent $\pi$ and permutation $P:\mathcal A\to\mathcal A$.
        \item
        Modify this exercise to apply to permutations
        of the observation-space.
    \end{enumerate}
\end{myexercise}

\section{Whether to take absolute values}
\label{absvaluesection}

Definition \ref{universalintelligencedefn} assigns negative intelligence to agents
who consistently minimize rewards.
This is based on the desire to measure performance:
agents who consistently minimize rewards have
poor performance. One might, however, argue that $|\LH^\sqcap_U(\pi)|$ would be a
better measure of the agent's intelligence:
if mathematical functions could have desires, one might argue that
when $\LH^\sqcap_U(\pi)<0$, we should give $\pi$ the benefit of the doubt, assume
that $\pi$ desires punishment, and conclude $\pi$ is intelligent. This would more closely
align with Bostrom's orthogonality thesis \cite{bostrom2012superintelligent}.
In the same way, a subject who answers every question wrong in a true-false IQ test
might be considered intelligent: answering every question wrong
is as hard as answering every question right, and we might give the subject the benefit
of the doubt and assume they meant to answer wrong\footnote{To quote
Socrates: ``Don't you think the ignorant person would often involuntarily
tell the truth when he wished to say falsehoods, if it so happened, because he
didn't know; whereas you, the wise person, if you should wish to lie,
would always consistently lie?'' \cite{lesserhippias}}.
Rather than take a side and declare
one of $\LH^\sqcap_U$ or $|\LH^\sqcap_U|$ to be the better measure, we consider
them to be two equally valid measures, one of which measures performance and one of
which measures the agent's ability to consistently extremize rewards (whether
consistently positively or consistently negatively).

If one knew that $\pi$'s Legg-Hutter intelligence were negative, one could
derive the same benefit from $\pi$ as from $\overline{\pi}$: just flip
rewards. This raises the question: given $\pi$, can
one computably determine $\sgn(\LH^\sqcap_U(\pi))$? Or more weakly, is there
a procedure which outputs $\sgn(\LH^\sqcap_U(\pi))$ when
$\LH^\sqcap_U(\pi)\not=0$ (but, when $\LH^\sqcap_U(\pi)=0$, may output a wrong answer
or get stuck in an infinite loop)?
One can easily contrive non-$\sqcap$-symmetric PFUTMs where $\sgn(\LH^\sqcap_U(\pi))$
is computable from $\pi$---in fact, without the $\sqcap$-symmetry requirement,
one can arrange that $\LH^\sqcap_U(\pi)$ is
\emph{always} positive, by arranging that $\LH^\sqcap_U(\pi)$ is dominated by
a low-$K$ environment that blindly gives all agents $+1$ total reward.
On the other hand, one can contrive a $\sqcap$-symmetric PFUTM such that
$\sgn(\LH^\sqcap_U(\pi))$ is not computable from $\pi$ even in the weak sense\footnote{Arrange
that $\LH^\sqcap_U$ is dominated by $\mu$ and $\bar{\mu}$ where $\mu$ is an environment
that initially gives reward $.01$, then waits for the agent to input the code of
a Turing machine $T$, then (if the agent does so), gives reward $-.51$, then
gives rewards $0$ while simulating $T$ until $T$ halts, finally giving reward $1$
if $T$ does halt.
Then if $\sgn(\LH^\sqcap_U(\pi))$ were computable (even
in the weak sense), one could compute it for strategically-chosen agents and
solve the Halting Problem.}.
We leave it an open question whether there is any $\sqcap$-symmetric
PFUTM $U$ where $\sgn(\LH^\sqcap_U(\pi))$ is computable (in the strong
or weak sense).

\section{Conclusion}
\label{conclusionsecn}

By allowing environments to punish agents,
we found additional algebraic structure in the agent-environment
framework. Using this, we showed
that certain Kolmogorov complexity symmetries yield
Legg-Hutter intelligence symmetry.

In future work it would be interesting to explore how these symmetries
manifest themselves in other Legg-Hutter-like intelligence measures
\cite{gavane} \cite{goertzel2006patterns} \cite{hernandez}.
The precise strategy we employ in this
paper is not directly applicable to prediction-based intelligence measurement
\cite{hibbard} \cite{alexander2021measuring}
\cite{gamez2021measuring}, but a higher-level
idea still applies:
an intentional mis-predictor underperforms a $0$-intelligence blind guesser.


\section*{Acknowledgments}

We acknowledge Jos{\'e} Hern{\'a}ndez-Orallo, Shane Legg, Pedro Ortega, and
the reviewers for comments and feedback.

\bibliographystyle{alpha}
\bibliography{sym}
\end{document}